\documentclass[conference]{IEEEtran}
\IEEEoverridecommandlockouts
\usepackage{times}
\usepackage{amsthm, amsfonts, amsmath, amssymb, mathtools}
\usepackage[numbers]{natbib}
\usepackage{multicol}
\usepackage{xcolor}
\usepackage[bookmarks=true]{hyperref}

\pdfinfo{
   /Author (Homer Simpson)
   /Title  (Robots: Our new overlords)
   /CreationDate (D:20101201120000)
   /Subject (Robots)
   /Keywords (Robots;Overlords)
}

\newcommand{\state}{x}
\newcommand{\sset}{\mathcal{X}}

\newcommand{\stateseq}{\mathbf{\state}}
\newcommand{\ctrl}{u}
\newcommand{\ctrlseq}{\mathbf{\ctrl}}
\newcommand{\cset}{\mathcal{U}}

\newcommand{\dstb}{d}

\newcommand{\tfunc}{l}

\newcommand{\vfunc}{V}

\newcommand{\vfsafe}{\vfunc_s}
\newcommand{\cost}{J}

\newcommand{\traj}{\xi}

\newcommand{\tvar}{t}
\newcommand{\tend}{T}
\newcommand{\tdummy}{\tau}

\newcommand{\ph}{h}
\newcommand{\ch}{h_c}
\newcommand{\tendmpc}{K}
\newcommand{\mpcindex}{k}
\newcommand{\dyn}{f}
\newcommand{\ddyn}{f_d}

\newtheorem{problem}{Problem}

\newtheorem{lemma}{Lemma}

\begin{document}

\title{Safe and Performant Deployment of Autonomous Systems via Model Predictive Control and Hamilton-Jacobi Reachability Analysis}

\author{Hao Wang$^{1,2}$, Armand Jordana$^{3}$, Ludovic Righetti$^{3}$, Somil Bansal$^{2}$%
\thanks{$^{1}$Author is with the Department of Electrical and Computer Engineering, University of Southern California.\ (email:\{haowwang\}@usc.edu)}
\thanks{$^{2}$Authors are with the Department of Aeronautics and Astronautics, Stanford University. (email:{\{haowwang, somil@stanford.edu\})}}%
\thanks{$^{3}$Authors are with the Electrical and Computer Engineering Department, New York University. (email: \{aj2988, ludovic.righetti\}@nyu.edu)}
\thanks{This research is supported in part by the DARPA Assured Neuro Symbolic Learning and Reasoning (ANSR) program and by the NSF CAREER program (2240163).}%
}

\maketitle

\begin{abstract}
While we have made significant algorithmic developments to enable autonomous systems to perform sophisticated tasks, it remains difficult for them to perform tasks effective and safely. Most existing approaches either fail to provide any safety assurances or substantially compromise task performance for safety. In this work, we develop a framework, based on model predictive control (MPC) and Hamilton-Jacobi (HJ) reachability, to optimize task performance for autonomous systems while respecting the safety constraints. Our framework guarantees recursive feasibility for the MPC controller, and it is scalable to high-dimensional systems. We demonstrate the effectiveness of our framework with two simulation studies using a 4D Dubins Car and a 6 Dof Kuka iiwa manipulator, and the experiments show that our framework significantly improves the safety constraints satisfaction of the systems over the baselines. 
\end{abstract}

\IEEEpeerreviewmaketitle

\section{Introduction}

Recent advances in foundation models \cite{vaswani_2017_transformer, gpt4_report} have transformed the field of robotics and rendered the prospect of robots performing useful tasks for us ever more realistic. In the field of robotics, large language models (LLM) and vision language models (VLM) have been employed to perform high level decision-making \cite{singh_2023_progprompt, kambhampati_2024_llm_cant_plan} and synthesize action controls \cite{liang_2023_code_as_policies, ahn_2022_saycan}, in diverse settings thanks to their generalizability.

Despite their impressive performance \cite{gemini_robotics, pi_0_5}, robot foundation models are prone to hallucinations and adversarial attacks \cite{huang_2025_llm_halucination_survey, kumar_2023_llm_adversarial}. Since robots are designed to deploy around humans, their actions are often safety-critical, as safety violations can lead to irreversible damages. In this work, we develop a framework to safeguard autonomous systems, including foundation model-enabled robots, against pre-defined safety violations, while optimizing for the task performance of the robots.

Under the setting that we are given a set of constraints which the system should respect, a number of methods can be used to cooptimize the performance and safety of the robot. The most prominent approach in the data-driven robot control community is converting the safety constraints into penalties and incorporate them in the reward function \cite{srinivasan_2020_safe_critic, bharadhwaj_2020_conservative_safety_critic}. Though such approaches typically generate policies that encourage safe behaviors, they do not provide any safety guarantees. On the other hand, the control community poses the problem as a state-constrained optimal control problem and solve it using dynamic programming \cite{altarovici_2013_scocp, wang_2024_coop_safe_perf}. This line of methods provides safety guarantees, but they cannot scale to high dimensional systems due to the curse of dimensionality associated with dynamic programming. Another popular family of methods is safety filtering \cite{ames_2016_cbf_qp, wabersich_2023_data_driven_filter, borquez_2024_safety_filtering}, where the nominal controller is modified minimally if it is deemed unsafe. However, safety filtering is inherently myopic given its construction and typically leads to suboptimal task performance. 

In this work, we propose to solve the problem of cooptimization using model predictive control (MPC) and Hamilton-Jacobi (HJ) reachability analysis. The core of our method is to incorporate the safety value function, obtained using HJ reachability analysis, as a final-time constraint in the MPC. Our method provides a closed-loop controller that optimizes the given performance objective while respecting pre-defined safety constraints, and our method can be efficiently scaled to high-dimensional systems and widely applicable to foundation model-enabled robots. 

The contribution of this work is two-fold: 1) we propose a scalable framework for cooptimizing safety and performance for robotic systems, and 2) we demonstrate the effectiveness of our framework in two simulation studies with a 4D Dubins Car and 6 DOF Kuka iiwa manipulator.

\section{Problem Formulation}
We formulate the problem of cooptimization of safety and performance as a hierarchical control problem of the robot. The LLM/VLM, taking language commands from users, acts as the high-level symbolic planner, and it outputs a cost function encoding the desired behaviors. The MPC controller performs low-level trajectory optimization over the cost function and pre-defined safety constraints. 

Formally, the low level trajectory optimization is formulated as a state-constrained optimal control problem \cite{altarovici_2013_scocp} in Prob. \ref{prob:state_constrained_opt_ctrl_prob}. Let us use $\traj_{\state,\tvar}^{\ctrlseq}:[\tvar,\tend]\rightarrow\sset$ to denote the state trajectory starting from state $\state$ at time $\tvar$ evolved with control signal $\ctrlseq:[\tvar,\tend)\rightarrow \cset$. With a slight abuse of the notation, we use $\traj_{\state,\tvar}^{\ctrlseq}(\tdummy)$ to denote the state at time $\tdummy\geq\tvar$ along the trajectory $\traj_{\state,\tvar}^{\ctrlseq}$.

\begin{problem}[State-Constrained Optimal Control Problem]\label{prob:state_constrained_opt_ctrl_prob}
\begin{subequations}
\begin{align}
    \begin{split}\label{eq:state_constrained_oc_cost}
    &\inf_{\ctrlseq} \quad \cost(\state, \tvar,\ctrlseq) = \int_\tvar^\tend r(\traj_{\state,\tvar}^{\ctrlseq}(\tdummy), \ctrlseq(\tdummy))  d\tdummy \\ 
    & \qquad \qquad \qquad \qquad \qquad \ \ \ + \phi(\traj_{\state,\tvar}^{\ctrlseq}(\tend)) \\
    \end{split}\\
    &s.t.  \quad \frac{d}{d\tdummy}\traj_{\state,\tvar}^{\ctrlseq}(\tdummy) = \dyn(\traj_{\state,\tvar}^{\ctrlseq}(\tdummy), \ctrlseq(\tdummy))  \ \forall \tdummy \in [\tvar,\tend) \label{eq:dyn}\\
    & \qquad \ \tfunc(\traj_{\state,\tvar}^{\ctrlseq}(\tdummy))\geq 0  \ \forall \tdummy \in [\tvar,\tend] \label{eq:state_constraint}
    \end{align}
\end{subequations}
\end{problem}

In this work, we are primarily interested in solving Prob. \ref{prob:state_constrained_opt_ctrl_prob}, as it optimizes for the desired behaviors commanded by the user, while respecting the pre-defined safety constraints. Crucially, the hierarchical structure and the set up of Prob. \ref{prob:state_constrained_opt_ctrl_prob} ensures that the robot will not violate the pre-defined safety constraints despite possible hallucinations of or even adversarial attacks on the LLM/VLM.

\section{Method}
We solve Prob. \ref{prob:state_constrained_opt_ctrl_prob} online in a model predictive control (MPC) fashion (i.e. solving Prob. \ref{prob:state_constrained_opt_ctrl_prob} over a shorter planning horizon $\ph$, executing the first few controls, and solving Prob. \ref{prob:state_constrained_opt_ctrl_prob} again from the evolved state), since the closed-loop nature of MPC provides robustness against potential modeling errors and unforeseen disturbances in deployment. 

Though optimal control solvers have demonstrated impressive speed and reliability, in practice the planning horizon $\ph$ of the MPC is much shorter than the task horizon $\tend$ due to latency requirements. While it helps to reduce the online computation burden, the short planning horizon leads to myopic behaviors, often resulting in eventual violation of the safety constraints. The key novelty of our approach is utilizing the safety value function in the MPC formulation to ensure persistent satisfaction of the safety constraints. 

\subsection{MPC Formulation}
In order to utilize the MPC framework to solve Prob. \ref{prob:state_constrained_opt_ctrl_prob}, we discretize the system dynamics and define the discrete-time state-constrained optimal control problem in Prob. \ref{prob:dt_scocp}. We denote the discrete-time dynamics, state trajectory, and control trajectory by $\ddyn$, $\stateseq$ and $\ctrlseq$, respectively. 

\begin{problem}[Discrete-Time State-Constrained Optimal Control Problem]\label{prob:dt_scocp}
    \begin{subequations}
    \begin{align}
        &\min_{\ctrlseq} \quad \cost(\state,\ctrlseq) = \sum_{\mpcindex=1}^{\tendmpc} r(\stateseq(\mpcindex), \ctrlseq(\mpcindex)) + \phi(\stateseq(\tendmpc))\label{eq:state_constrained_oc_cost} \\ 
        & s.t. \quad \stateseq(\mpcindex+1)  = \ddyn(\stateseq(\mpcindex), \ctrlseq(\mpcindex))  \ \forall \mpcindex \in \{1, 2,\ldots,\tendmpc-1\} \label{eq:dt_dynamics}\\
        & \qquad \ \tfunc(\stateseq(\mpcindex))\geq 0  \ \forall \mpcindex \in \{1, 2,\ldots,\tendmpc\} 
        \end{align}
    \end{subequations}
\end{problem}

Suppose the planning horizon is given by $\ph$. Then at time step $j$, the MPC is formulated as follows. 

\begin{problem}[MPC]\label{prob:mpc}
    \begin{subequations}
    \begin{align}
        &\min_{\ctrlseq} \quad \sum_{\mpcindex=j}^{j +\ph} r(\stateseq(\mpcindex), \ctrlseq(\mpcindex)) + \phi(\stateseq(j + \ph)) \\ \label{eq:mpc_objective}
        \begin{split}
             & s.t. \quad \stateseq(\mpcindex+1)  = \ddyn(\stateseq(\mpcindex), \ctrlseq(\mpcindex))  \\
             & \qquad \qquad \quad \forall \mpcindex \in \{j, j + 1 ,\ldots,j + \ph -1\} \\
        \end{split} \\
        & \qquad \ \tfunc(\stateseq(\mpcindex))\geq 0  \ \forall \mpcindex \in \{j, j + 1 ,\ldots, j + \ph\} 
        \end{align}
    \end{subequations}
\end{problem}

\subsection{Safety Value MPC}
Due to the fact that the planning horizon is typically shorter than the task horizon, MPC is not guaranteed to be recursively feasible. When recursive feasibility does not hold, Prob. \ref{prob:mpc} becomes infeasible at some time step $j$, despite starting from a feasible state $\state$. The most common approach to overcome this challenge is imposing a final-time constraint to ensure the system arrive at a set of states that are known to be recursively feasible. A desired final-time constraint has the following properties: 1) when satisfied, the system is guaranteed recursive feasibility, and 2) it should captures as ``many" recursively feasible states as possible. In this subsection, we introduce how we construct the desired final-time constraint using Hamilton-Jacobi reachability analysis. 

Given the state constraint in continuous-time Eq. \eqref{eq:state_constraint}, we formulate the safety optimal control problem in Prob. \ref{prob:safety_ocp}.

\begin{problem}[Safety Optimal Control Problem] \label{prob:safety_ocp}
\begin{equation}
\begin{split}
     \sup_{\ctrlseq}& \quad J(\state, \tvar, \ctrlseq) = \min_{\tdummy \in [\tvar,\tend]} \tfunc(\traj_{\state,\tvar}^{\ctrlseq}(\tdummy)) \\
    s.t.    & \quad \frac{d}{d\tdummy}\traj_{\state,\tvar}^{\ctrlseq}(\tdummy) = \dyn(\traj_{\state,\tvar}^{\ctrlseq}(\tdummy), \ctrlseq(\tdummy))  \ \forall \tdummy \in [\tvar,\tend) \\
    \end{split}
\end{equation}
\end{problem}

The value function of Prob. \ref{prob:safety_ocp}, is given by 
\begin{equation}
    \vfsafe(\state, \tvar) = \sup_{\ctrlseq} \min_{\tdummy \in [\tvar,\tend]} \tfunc(\traj_{\state,\tvar}^{\ctrlseq,\mathbf{\dstb}}(\tdummy))
\end{equation}
In this work we consider the \emph{converged} value function $\vfsafe(\state) = \lim_{\tvar \rightarrow \infty} \vfsafe(\state, \tvar)$, and we refer to this converged value function of Prob. \ref{prob:safety_ocp} as the \emph{safety value function}. It is important to note that the super 0-level set of the safety value function $\{\state\in\sset | \vfsafe(\state) \geq 0\}$ is the set of recursively feasible states considering the state constraint Eq. \eqref{eq:state_constraint} under the system dynamics Eq. \eqref{eq:dyn}. Furthermore, the super 0-level set contains all possible recursively feasible states, shown in \cite{borquez_2024_safety_filtering}.

Equipped with the safety value function, we incorporate it as a final-time constraint in the MPC formulation in Prob. \ref{prob:sp_mpc}. The MPC formulation is recursively feasible, and this result is shown in Lemma \ref{lemma:recursive_feasibility}. We implement the MPC controller using optimal control library Crocoddyl \cite{mastalli_2020_crocoddyl} and solver \cite{jordana_2023_mim_solvers}. 

\begin{problem}[Safety Value MPC]\label{prob:sp_mpc}
    \begin{subequations}
    \begin{align}
        &\min_{\ctrlseq} \quad \sum_{\mpcindex=j}^{j +\ph} r(\stateseq(\mpcindex), \ctrlseq(\mpcindex)) + \phi(\stateseq(j + \ph)) \\ \label{eq:mpc_objective}
        \begin{split}
             & s.t. \quad \stateseq(\mpcindex+1)  = \ddyn(\stateseq(\mpcindex), \ctrlseq(\mpcindex))  \\
             & \qquad \qquad \quad \forall \mpcindex \in \{j, j + 1 ,\ldots,j + \ph -1\} \\
        \end{split} \\
        & \qquad \ \tfunc(\stateseq(\mpcindex))\geq 0  \ \forall \mpcindex \in \{j, j + 1 ,\ldots,j + \ph -1\}  \\ 
        & \qquad \ \vfsafe(\stateseq(j + \ph)) \geq 0 \label{eq:svfunc_constraint} 
        \end{align}
    \end{subequations}
\end{problem}

\begin{lemma}[Recursive Feasibility of Safety Value MPC]\label{lemma:recursive_feasibility}
    Suppose $\stateseq(0)$, the initial state, is recursively feasible (i.e. $\stateseq(0)\in \{\state\in\sset | \vfsafe(\state) \geq 0\}$). Then Prob. \ref{prob:sp_mpc} is recursively feasible. 
\end{lemma}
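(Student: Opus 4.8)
The plan is to prove recursive feasibility by induction on the MPC time step $j$, leaning on two structural properties of the safety value function $\vfsafe$ that follow directly from its definition and from the recursively-feasible-set characterization cited from \cite{borquez_2024_safety_filtering}. The notion I will establish is the standard one: if Prob. \ref{prob:sp_mpc} admits a feasible solution at step $j$, it also admits one at step $j+1$ after the first control is applied and the state advances to $\stateseq(j+1)$.

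First I would record two facts about $\vfsafe$. (i) Since $\vfsafe(\state,\tvar) = \sup_{\ctrlseq}\min_{\tdummy\in[\tvar,\tend]}\tfunc(\traj_{\state,\tvar}^{\ctrlseq}(\tdummy))$ includes the instant $\tdummy=\tvar$, at which the trajectory equals $\state$, every inner minimum is bounded above by $\tfunc(\state)$, so $\vfsafe(\state)\leq\tfunc(\state)$; hence $\vfsafe(\state)\geq 0$ already implies the pointwise constraint $\tfunc(\state)\geq 0$. (ii) The super $0$-level set $\{\state\mid\vfsafe(\state)\geq 0\}$ is control invariant: for every $\state$ with $\vfsafe(\state)\geq 0$ there is a control $u$ with $\vfsafe(\ddyn(\state,u))\geq 0$. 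This is exactly the statement that the super $0$-level set is the set of recursively feasible states under the dynamics and constraint.

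For the inductive step, suppose the controls $\ctrlseq(j),\ldots,\ctrlseq(j+\ph-1)$ with states $\stateseq(j),\ldots,\stateseq(j+\ph)$ are feasible at step $j$. I construct a candidate for step $j+1$ by shifting: reuse the tail $\ctrlseq(j+1),\ldots,\ctrlseq(j+\ph-1)$, which regenerates the states $\stateseq(j+1),\ldots,\stateseq(j+\ph)$ and inherits their state-constraint satisfaction for indices $j+1,\ldots,j+\ph-1$. The only new state constraint, $\tfunc(\stateseq(j+\ph))\geq 0$, is supplied by the old terminal condition $\vfsafe(\stateseq(j+\ph))\geq 0$ via fact (i). It then remains to append one control $\ctrlseq(j+\ph)$; applying fact (ii) at $\stateseq(j+\ph)$ yields a $u$ driving the state to $\stateseq(j+1+\ph)$ with $\vfsafe(\stateseq(j+1+\ph))\geq 0$, i.e.\ the new terminal constraint. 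The base case at $j=0$ follows identically: starting from $\stateseq(0)$ with $\vfsafe(\stateseq(0))\geq 0$ and iterating (ii) for $\ph$ steps keeps the trajectory in the super $0$-level set, so all running constraints hold by (i) and the terminal constraint holds by construction.

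The main obstacle I anticipate is reconciling the continuous-time origin of $\vfsafe$ with the discrete-time dynamics $\ddyn$ of the MPC: invariance (ii) is most naturally a continuous-time statement, whereas the shift argument needs a \emph{single} discrete control that preserves membership in the super $0$-level set under $\ddyn$. I would close this gap by assuming the discretization is consistent with the continuous dynamics $\dyn$, so that the invariance-preserving continuous control can be sampled and held over one step, and by using that $\vfsafe$ is the \emph{converged} value function, for which the super $0$-level set is genuinely invariant rather than merely shrinking over a finite horizon.
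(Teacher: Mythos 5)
Your proposal is correct, and it rests on the same underlying fact as the paper's proof: the super $0$-level set of the converged safety value function is the set of all recursively feasible (viable) states, hence control invariant. But your route is genuinely more explicit. The paper's proof never constructs a candidate solution for the next MPC instance; it asserts, citing \cite{borquez_2024_safety_filtering}, that all states along the solution trajectory of Prob.~\ref{prob:sp_mpc} are recursively feasible, and then inducts over replanning times (note also that the paper replans every $\ch$ steps rather than every step --- a minor difference your shift-by-one argument handles by iterating your fact (ii) $\ch$ times). Your shift-and-append construction, together with the explicit observation that $\vfsafe(\state) \leq \tfunc(\state)$ pointwise (so the old terminal constraint supplies the new running constraint at index $j+\ph$), fills in exactly the step the paper leaves implicit: \emph{why} recursive feasibility of the trajectory states implies that the next optimization problem, with its terminal constraint \eqref{eq:svfunc_constraint}, actually admits a feasible point. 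This is the textbook MPC recursive-feasibility argument, and it is the rigorous scaffolding behind the paper's terser claim. You also honestly flag the one gap both proofs share: $\vfsafe$ and the invariance of its super $0$-level set are continuous-time objects, while Prob.~\ref{prob:sp_mpc} evolves under the discretized dynamics $\ddyn$; the paper silently ignores this mismatch, whereas you name the discretization-consistency assumption needed to close it. The only caution is that this assumption is doing real work --- without it (e.g., with a coarse time step), a single held control sampled from the continuous-time safe policy need not keep $\ddyn(\state,\ctrl)$ inside the super $0$-level set, so it should be stated as an explicit hypothesis rather than treated as automatic.
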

\begin{proof}
    Since the set $\{\state\in\sset | \vfsafe(\state) \geq 0\}$ is the set of all recursively feasible states, shown in \cite{borquez_2024_safety_filtering}, and $\stateseq(0)$ is recursively feasible, we know that all states along the solution state trajectory of Prob. \ref{prob:sp_mpc} $\stateseq(k) \ \forall k \in\{1,\ldots,h\}$ are recursively feasible. By applying the first $\ch<\ph-1$ controls from the control solution trajectory $\ctrlseq$, the system arrive at a recursively feasible state $\stateseq(\ch)$, assuming the absence of disturbances. Using the same argument, we can show that $\forall k < K$, Prob. \ref{prob:sp_mpc} is feasible for initial state $\stateseq_s(k)$, where $\stateseq_s(k)$ is a state along the system's state trajectory resulting from solving Prob. \ref{prob:sp_mpc} and applying the first $\ch$ controls of the solution control trajectory $\ctrlseq$ at each time $k$, using induction.

\end{proof}

\section{Experiments}

\subsection{4D Dubins Car}
In this case study, we simulate a 4D Dubins car with the following dynamics 
\begin{equation*}
    \begin{bmatrix}
        \dot{x} \\ \dot{y} \\ \dot{\theta} \\ \dot{v}
    \end{bmatrix} = \begin{bmatrix}
        v\cos(\theta) \\ v\sin(\theta) \\ \ctrl_1 \\ \ctrl_2
    \end{bmatrix}
\end{equation*}
where $x, y, \theta$, and $v$ are the $x-$position, $y-$position, heading, and velocity of the system. $\ctrl_1$ and $\ctrl_2$ are the controls of the system. The system is tasked with moving from a random initial state to a goal state, defined in the $x-y$ plane, while avoiding obstacles over a 2 seconds time horizon. More formally, the running cost and the terminal cost, defined in Eq. \eqref{eq:mpc_objective}, are given by $r(\state) = \phi(\state) = ||[x,y]^\top - [x_g, y_g]^\top||_2$, where $[x_g, y_g]^\top$ is the goal state. The obstacles and the goal are shown in Figure. \ref{fig:dubins4d_example}. 

\begin{figure}[h!]
    \centering
    \includegraphics[width=0.95\linewidth]{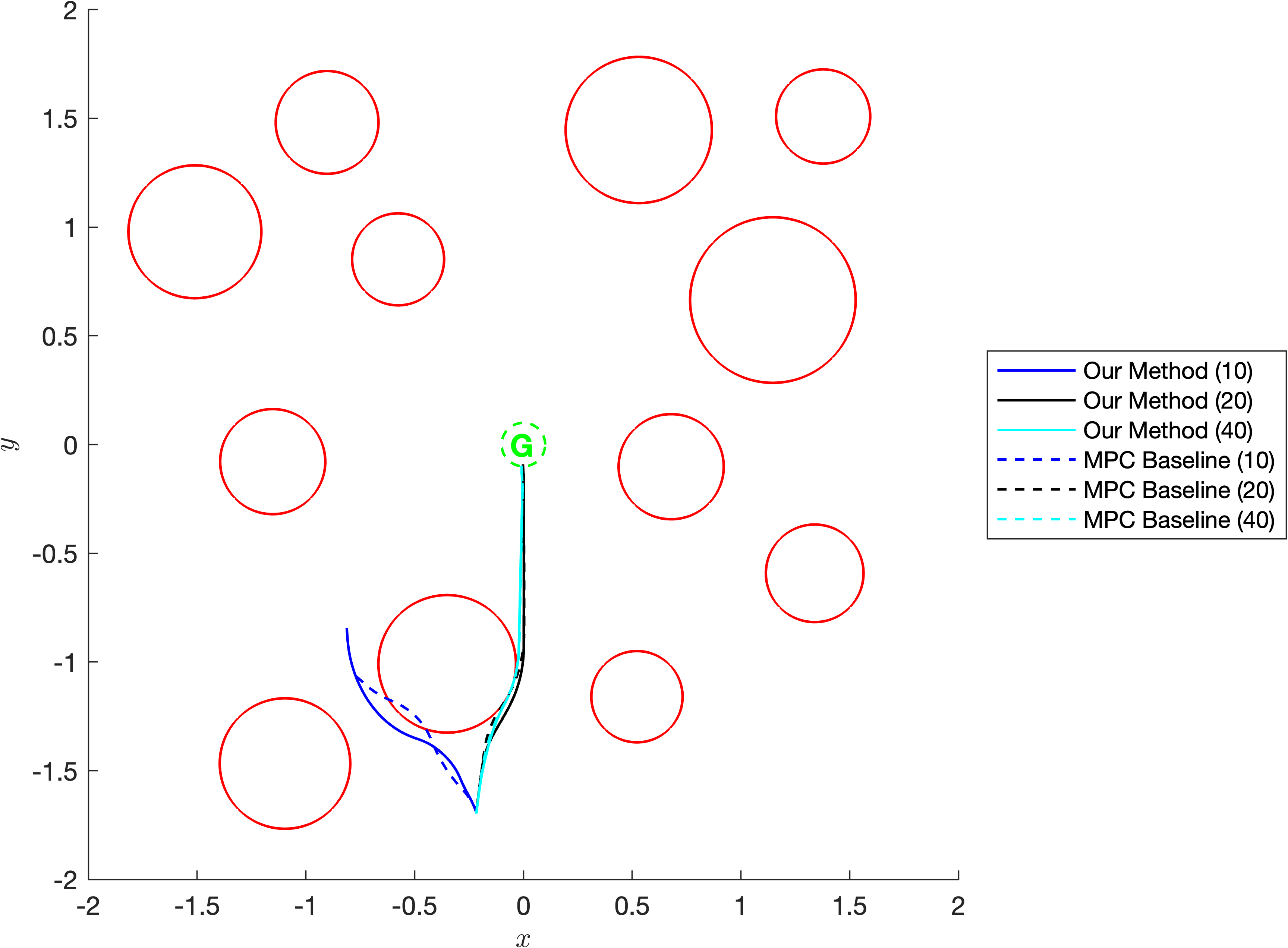}
    \caption{Dubins4D obstacle configuration and rollouts for our method and the baselines}
    \label{fig:dubins4d_example}
\end{figure}

We use a task horizon of 2 seconds and MPC time step of 0.01 second. We consider a baseline MPC controller, which is identical to our method except for not employing the safety value function as a final time constraint in Eq. \eqref{eq:svfunc_constraint}. We also vary the planning horizon (10, 20, and 40 steps) to demonstrate its effect on the safety and performance of the system. In this case study, the safety value function is computed using the LevelSetToolBox \cite{mitchell_2007_lst} and HelperOC \cite{helperOC} over a $50\times 50 \times 50 \times 30$ grid. While we synthesize the MPC controllers online, we compute the safety value function offline. 

For evaluation, we focus on the rollout success rate and the cost of the synthesized state trajectories. For clarity, we compare the results to our method with 20 planning steps. As shown in TABLE \ref{tab:dubins4d}, our method is consistently safe (with the exception of 1 trial) regardless of the planning horizon. On the other hand, the rollout success rate increases with the planning horizon. This is expected since with longer planning horizon, the MPC is able to reason about the safety constraint further into the future, and hence reducing the number of safety violations. Our method essentially condenses all the reasoning regarding the safety constraint into the safety value function, and as a result, it satisfies the safety constraint given any planning horizon. 

Consistent satisfaction of the safety constraints comes at a slight cost of the task performance. As we can see in TABLE \ref{tab:dubins4d}, our method with 20 planning steps is only able to obtain better task performance on $45.98\%$ of the trials than the MPC baseline with the same planning horizon. This is also expected because we employ the \emph{converged} safety value function in our formulation, and it typically leads to slight more conservative behaviors. Also unsurprisingly, the task performance of our method and the MPC baseline improves as the planning horizon increases. 

\begin{table}[h!]
\centering
\caption{Comparison of metrics for our method and the baselines}
\begin{tabular}{|c|c|c|}
\hline
\multicolumn{1}{|l|}{}                                      & \begin{tabular}[c]{@{}c@{}}Rollout Success\\ Rate\end{tabular} & \begin{tabular}[c]{@{}c@{}}\% Trajectories with Higher\\ Cost Compared to \\ Our Method (20)\end{tabular} \\ \hline
\begin{tabular}[c]{@{}c@{}}Our Method\\ (10)\end{tabular}   & 100\%                                                          & 93.62\%                                                                                                   \\ \hline
\begin{tabular}[c]{@{}c@{}}\textbf{Our Method}\\ \textbf{(20)}\end{tabular}   & 100\%                                                          & -                                                                                                         \\ \hline
\begin{tabular}[c]{@{}c@{}}Our Method\\ (40)\end{tabular}   & 99\%                                                           & 9.68\%                                                                                                    \\ \hline
\begin{tabular}[c]{@{}c@{}}MPC Baseline\\ (10)\end{tabular} & 80\%                                                           & 89.47\%                                                                                                   \\ \hline
\begin{tabular}[c]{@{}c@{}}MPC Baseline\\ (20)\end{tabular} & 90\%                                                           & 45.98\%                                                                                                   \\ \hline
\begin{tabular}[c]{@{}c@{}}MPC Baseline\\ (40)\end{tabular} & 98\%                                                           & 6.45\%                                                                                                    \\ \hline
\end{tabular}
\label{tab:dubins4d}
\end{table}

\subsection{6 DoF Kuka iiwa Manipulator}
Consider a scenario where the manipulator is tasked with moving a cup of coffee quickly through a cluttered environment without spilling. Though motion planning can enable the manipulator to perform agile obstacle avoidance maneuvers, it cannot do so while considering the dynamic effects of the manipulator on the coffee, likely leading to spills or running into the obstacles. In this case study, we aim to synthesize the dynamic, agile, and safe behaviors that are required to perform the prescribed task. We simulate a 6 DOF Kuka iiwa manipulator with a 12D state space (6 joint position variables and 6 joint velocity variables) and 6D control space (6 joint torques). Note that the wrist joint is locked since it is irrelevant for this experiment. The task of the manipulator is moving from the initial state to the goal end-effector position in the operation space, while avoiding obstacles in the operation space and keeping the joint accelerations within certain intervals. The joint acceleration constraints are in placed to prevent the manipulator from moving too rapidly and spilling the coffee. 

Let us denote the joint position, velocity, and acceleration by $q, v$ and $a$, and we use the shorthand $FK(\cdot)$ for forward kinematics, mapping joint positions to $(x,y,z)$ positions of the end-effector. We define the failure set to be a cylindrical region in the operation space, and the end-effector of the manipulator should avoid entering into this failure set. More concretely, the running cost and final cost are given by $r(q, v, u) = \phi(q, v, u) = ||FK(q) - x_g||_2$, the obstacle constraint is given by $l(FW(q)) \geq 0$, and the joint acceleration constraints are given by $\underline{a_i} \leq a_i \leq \overline{a_i} \ \forall i \in \{1,\ldots, 6\}$, where $x_g$ is the goal end-effector position, $\underline{a_i}$ and $\overline{a_i}$ are the lower and upper bounds for the $i^{th}$ joint acceleration.

Given the dimensionality of the system, we use a learning-based approach \cite{bansal_2021_deepreach, feng_2025_deepreach_mpc} to compute the safety value function. For this experiment, we use task horizon of 1 second, planning horizon of 15 steps, and MPC time step of 0.01 second. We again consider the MPC baseline where the safety value function constraint is not imposed. We roll out our method and the MPC baseline from 15 initial states, and our method succeeds in 11 out of 15 trials, while the MPC baseline only succeeds in 3 out of 15 trials. The end-effector trajectories of both methods for one of the trials are shown in Fig. \ref{fig:kuka12d_example}. Qualitatively, our method is more cautious around the obstacle, as it moves away from the obstacle before turning towards the goal. On the other hand, the MPC baseline heads directly for the goal and inevitably arrives at a state where it cannot avoid entering into the failure set. 

It is important to note that the MPC baseline incorporates the obstacle constraint Eq. \eqref{eq:state_constraint}. However, due to the limited planning horizon, it is unable to reason about the long-term safety and eventually leads the system into a state where it cannot satisfy the obstacle constraint. 

\begin{figure}[h!]
    \centering
    \includegraphics[width=0.95\linewidth]{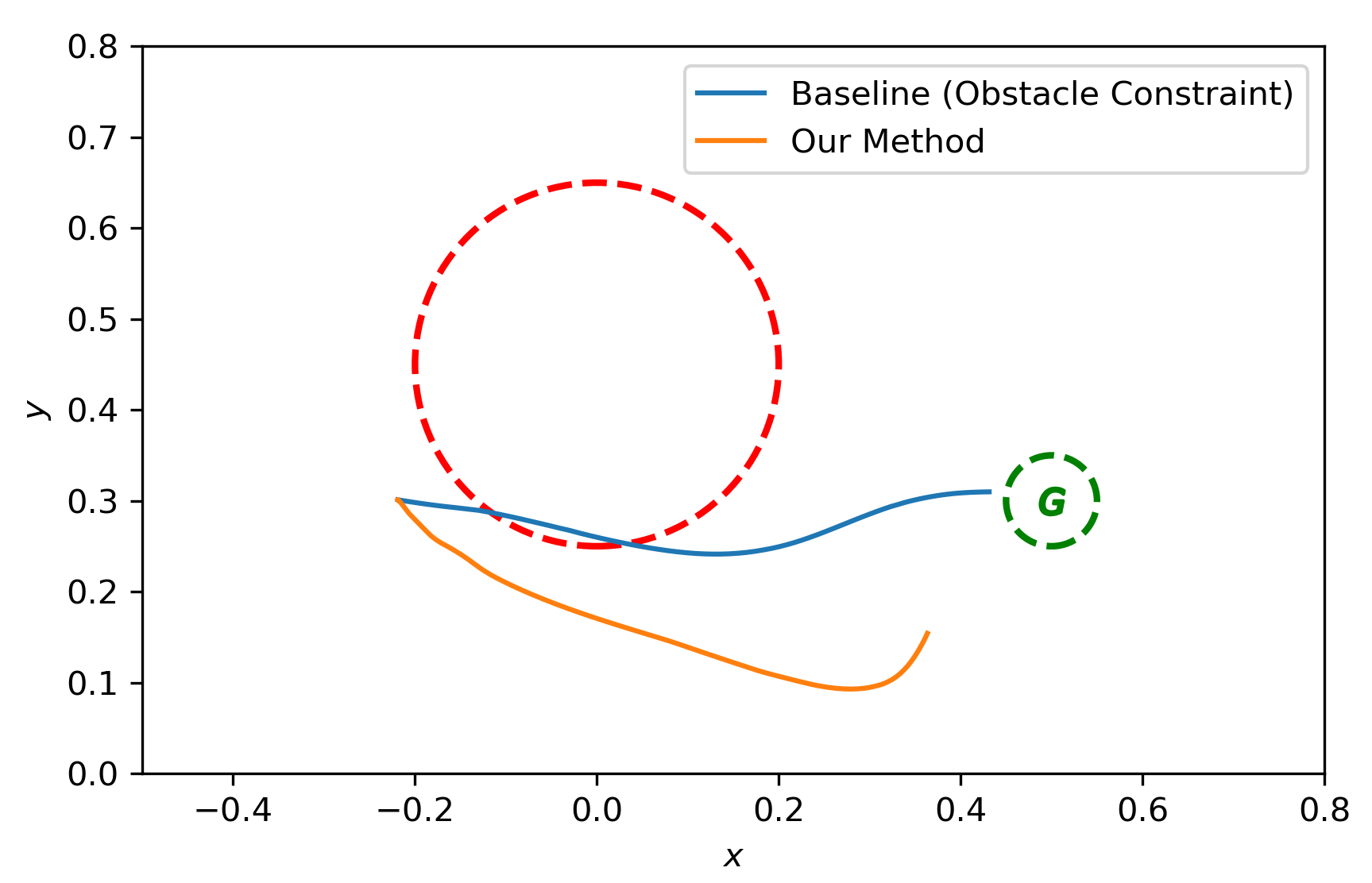}
    \caption{End-effector trajectories of our method and the MPC baseline for one of the trials}
    \label{fig:kuka12d_example}
\end{figure}

\section{Conclusion} 
\label{sec:conclusion}
In this work, we propose a scalable framework that cooptimizes safety and performance for autonomous systems, including foundation model-enabled robots. Our method incorporates the safety value function as a final-time constraint in the MPC formulation, and it is shown to improve the safety constraint satisfaction of the system in simulation studies. However, our method replies on learning-based methods for computing the safety value function for high-dimensional systems, and as a result, our framework currently cannot provide formal safety guarantees. We look to address this shortcoming in future works by utilizing verification techniques \cite{lin_2023_verify_neural_tubes, lin_2024_robust_verify_neural_tubes}.

\bibliographystyle{plainnat}
\bibliography{references}

\end{document}